\documentclass[conference]{IEEEtran}
\IEEEoverridecommandlockouts

\usepackage{cite}
\usepackage{amsmath,amssymb,amsfonts}
\usepackage{algorithmic}
\usepackage{graphicx}
\usepackage{textcomp}
\usepackage{xcolor}
\usepackage{bm}
\usepackage{mathtools}
\usepackage{subcaption}
\usepackage{tikz}
\usepackage{tikz-3dplot}
\usepackage{float}
\usepackage{geometry}
\usepackage{soul}
\usepackage{algorithm}
\usepackage{microtype}
\usepackage{amsthm}  
\usepackage{booktabs}
\usepackage[colorlinks=true, linkcolor=blue, citecolor=blue, urlcolor=blue]{hyperref}
\newtheorem{theorem}{Theorem}
\newtheorem{remark}{Remark}
\newtheorem{assumption}{Assumption}

\def\BibTeX{{\rm B\kern-.05em{\sc i\kern-.025em b}\kern-.08em
    T\kern-.1667em\lower.7ex\hbox{E}\kern-.125emX}}

\begin{document}

\title{Systematic Analysis of Coupling Effects on Closed-Loop and Open-Loop Performance in Aerial Continuum Manipulators
\\
{\footnotesize}

\thanks{This work was supported by the National Research Council Canada, grant number AI4L-128-1, and the Natural Sciences and Engineering Research Council of Canada, grant number 2023-05542. 
Niloufar Amiri, Shayan Sepahvand, and Farrokh Janabi-Sharifi are with the Department of Mechanical, Industrial, and Mechatronics Engineering, Toronto Metropolitan University, Toronto, ON M5B 2K3, Canada (emails: \texttt{\{niloufar.amiri, shayan.sepahvand, fsharifi\}@torontomu.ca}). 
Iraj Mantegh is with the Aerospace Research Center, National Research Council of Canada (NRC), Montreal, QC H3T 2B2, Canada (email: \texttt{iraj.mantegh@nrc-cnrc.gc.ca}).}%
}

\author{
Niloufar Amiri\IEEEauthorrefmark{1}, 
Shayan Sepahvand\IEEEauthorrefmark{1}, 
Iraj Mantegh\IEEEauthorrefmark{2},
Farrokh Janabi-Sharifi\IEEEauthorrefmark{1}
}

\maketitle
\begin{abstract}
This paper investigates two distinct approaches to the dynamic modeling of aerial continuum manipulators (ACMs): the decoupled and the coupled formulations. Both open-loop and closed-loop behaviors of a representative ACM are analyzed. The primary objective is to determine the conditions under which the decoupled model attains accuracy comparable to the coupled model while offering reduced computational cost under identical numerical conditions. The system dynamics are first derived using the Euler–Lagrange method under the piecewise constant curvature (PCC) assumption, with explicit treatment of the near-zero curvature singularity. A decoupled model is then obtained by neglecting the coupling terms in the ACM dynamics, enabling systematic evaluation of open-loop responses under diverse actuation profiles and external wrenches. To extend the analysis to closed-loop performance, a novel dynamics-based proportional-derivative sliding mode image-based visual servoing (DPD-SM-IBVS) controller is developed for regulating image feature errors in the presence of a moving target. The controller is implemented with both coupled and decoupled models, allowing a direct comparison of their effectiveness. The open-loop simulations reveal pronounced discrepancies between the two modeling approaches, particularly under varying torque inputs and continuum arm parameters. Conversely, the closed-loop experiments demonstrate that the decoupled model achieves tracking accuracy on par with the coupled model (within subpixel error) while incurring lower computational cost.
\end{abstract}

\begin{IEEEkeywords}
Aerial Continuum Manipulators, Image-based Visual Servoing (IBVS), Dynamic Modeling, Coupled and Decoupled Modeling, Constant Curvature.  
\end{IEEEkeywords}
\section{Introduction}
In recent years, uncrewed aerial vehicles (UAVs) have found wide-ranging applications in monitoring, delivery, and inspection \cite{mohsan2022towards}. However, performing specific tasks often requires UAVs to actively interact with objects and the environment, which necessitates the integration of additional components such as cables and robotic arms. While this integration increases the complexity of coupled modeling and control, it is essential for enabling active UAV intervention in grasping and aerial physical interactions \cite{khamseh2018aerial}.

Multi-rotor UAVs are agile aerial platforms well suited for maneuvering in confined indoor and outdoor environments. However, their payload capacity is limited compared to fixed-wing UAVs, as lift is generated solely by the thrust of rotary wings. This limitation highlights the requirement for careful consideration of any additional payload to be mounted on them \cite{kumar2024comprehensive}. As an initial approach, rigid robotic arms have been integrated into UAVs for aerial manipulation \cite{ollero2021past}. More recently, aerial continuum manipulators (ACMs) have been introduced to integrate the advantages of continuum robots (CRs) \cite{peng2025dexterous, ubellacker2024high}.

CRs are modular or continuous, lightweight robotic systems constructed from hyper-elastic materials, which enable substantial deformation along their entire structure and support smooth transitions between kinetic and potential energy \cite{russo2023continuum}. Their low stiffness and high damping properties make them appropriate for safe interaction with surfaces and objects, minimizing harsh impacts. Another key feature of CRs is their high adaptability to the surrounding environment, due to their virtually infinite degrees of freedom, which allow them to navigate effectively in unstructured and confined spaces \cite{wooten2022environmental}.

Despite these advantages, the dynamic behavior of hyper-elastic soft materials is highly nonlinear. The large deformation of CRs arises from distributed actuation along the entire body, rather than being concentrated at discrete joints. Consequently, dynamic modeling of CRs is computationally expensive, often infeasible, and largely incompatible with well-established controllers in robotics. This challenge becomes even more pronounced when CRs are coupled with UAVs \cite{boyer2020dynamics}.

The primary focus of this work is analytical dynamic modeling of the coupled ACM. While finite element methods \cite{ferrentino2023finite} and data-driven techniques \cite{liu2025data} provide alternative modeling approaches, analytical models \cite{Jalali2022} offer advantages such as efficiency and insight into the coupling effects between the UAV and its arm. Coupled and decoupled models of the integrated system each bring certain advantages. The coupled dynamics of ACMs provide an accurate representation of all system states and capture the effects of CR maneuvers on the UAV, such as shifts in the center of mass and changes in the moment of inertia. These effects are more pronounced in certain scenarios and may cause UAV instability if neglected. Although coupled dynamics offer a more realistic model, they have several drawbacks. A coupled rigid-soft analytical model can result in stiff partial differential equations, as the numerical solver must use small time steps to capture infinitesimal CR deformations, even though such small steps are unnecessary for UAV dynamics \cite{renda2020geometric}. Alternatively, a decoupled approach allows for efficient and simpler control system development.

\section{Related Works}
In aerial manipulation, tendon-driven CRs have been more widely adopted than those actuated by pressurized fluids \cite{sokolov2023design} or concentric tubes \cite{mitros2022theoretical} with external actuation units. This preference is attributed to several advantages, including ease of assembly, a more compact form factor, and higher payload capacity. In tendon-driven CRs, actuation is achieved through distributed tension applied by tendons routed along the robot’s body. Although the continuous backbone of the CR can theoretically undergo linear and shear local deformations, bending remains the most practically controllable mode in existing prototypes \cite{uthayasooriyan2024tendon}.

CR dynamics have traditionally been studied using classical theories such as Timoshenko \cite{haghshenas2023timoshenko} and Euler–Bernoulli beams \cite{rao2021using}, which rely on specific deformation modes and simplified assumptions, making them valid only within limited regions. A more accurate representation of slender CRs is provided by Cosserat rod theory \cite{till2019real, janabi2021cosserat}, which accounts for the general deformation of the robot. This theory leads to boundary-conditioned nonlinear partial differential equations (PDEs) that require numerical methods with high computational costs. Several variants of Cosserat rod theory have been developed, featuring different discretization schemes in time and space while maintaining comparable accuracy \cite{tummers2023cosserat}. Decoupled from the floating base, the CR has been modeled using Cosserat rod theory and applied to aerial manipulation in several works \cite{hashemi2023robust, samadikhoshkho2020modeling}. In addition to models that consider the continuum mechanics of CRs, other existing approximation models are more simplified, such as the redundant pseudo-rigid body \cite{huang20193d}, refined lumped mass \cite{zhang2024adaptive}, and center-of-gravity approaches \cite{godage2015accurate}. These models require case-by-case identification techniques and are less accurate for dynamic tasks, as they neglect the internal properties of the material used.

In the literature, due to the recency of ACMs, their dynamic modeling has not been fully investigated. Existing models are predominantly based on simplifying kinematic assumptions such as piecewise constant curvature (PCC) and piecewise constant strain (PCS) \cite{peng2025dexterous, peng2023aecom}, although variable-strain approaches have been introduced more recently \cite{AMIRI2025}. Leveraging the PCC assumption, a comprehensive dynamic model of an ACM has also been formulated using the Euler–Lagrange method, making it compatible with traditional model-based controllers \cite{samadikhoshkho2021coupled}. Owing to the inherent compatibility of the framework, the model has been further extended by researchers to accommodate a coupled dual-arm ACM \cite{ghorbani2023dual}. Furthermore, a coupled model of an ACM has been derived using an augmented rigid-body model adapted from rigid manipulators with the same kinematic assumption in \cite{szasz2022modeling}. In this model, the elastic behavior of the CR is represented through virtual springs and dampers. Even though the coupled model reflects ACM dynamics comprehensively by capturing the CR’s inertia on the UAV, it significantly increases computational complexity. Given that the CR’s nonlinear behavior already raises the cost compared to rigid aerial manipulators, the added burden from coupling terms necessitates detailed investigation.

In this study, we propose a robust image-based visual servoing (IBVS) controller, motivated by two main considerations. The primary challenge is that pose estimation of the ACM in GPS-denied or cluttered environments is highly challenging, if not infeasible. A further requirement is that the controller must tolerate various sources of uncertainty, including hysteresis, unmodeled dynamics, and tendon slacking \cite{SAMADI2022Reduced}. Beyond these considerations, we further investigate how the choice between a decoupled and a coupled model impacts the performance of perception-guided control in both the image and task spaces.

The contributions of this work are threefold. First, we conduct a comparative study of the factors that lead to discrepancies in the task-space open-loop response between the decoupled and coupled approaches for a single-section ACM. This analysis takes into account different actuation profiles, external wrenches applied at the CR tip, variations in UAV mass, and changes in CR length and stiffness. Second, we develop a novel dynamics-based proportional-derivative sliding mode IBVS (DPD-SM-IBVS) controller for target tracking with an eye-in-hand camera configuration. The Lyapunov stability framework is employed to guarantee the global ultimate boundedness of the image feature error. Finally, to highlight the differences in closed-loop error dynamics between the two approaches, we evaluate a visual tracking task in which the target follows distinct trajectories, including both linear and curved paths.

\section{Problem Formulation}

Prior to describing the problem, the following notations are adopted. The pre-superscript is utilized to show the frame in which the vector is defined, e.g. $\prescript{B}{}{\bm{P}}$ shows the vector is defined in the frame $B$, while $\prescript{B}{}{\bm{R}}_{A}$ indicates the rotation from frame $A$ to frame $B$.

\subsection{Single-Section CR Differential Kinematics}

In this work, we consider a single-section ACM as the first step in getting insight into coupling effects in ACMs. The configuration variables of the CR are illustrated in  \linebreak Fig. \ref{fig:CR_Frames}. In this figure, $r$ is the radius of the curvature, which is the reciprocal of curvature, $\kappa = \frac{1}{r}$, and $\theta_a = \frac{s}{r} = \kappa s$ where $s$ is the spatial variable along the  CR length. The forward kinematics of a single-section CR based on PCC assumption is obtained as follows:
\vspace{-0.1cm}
\begin{equation}
     \prescript{B}{}{\bm{P}}_{s} = \prescript{B}{}{\bm{P}}_{o}+{\mathcal{\bm F}}(s,\kappa,\psi_a),
\label{eq:eq_s_position}
\end{equation}

\vspace{-0.1cm}

\noindent where  $\prescript{B}{}{\bm{P}_s} = [x_{s}, y_{s}, z_{s}]^\top$ is the position of the spatial element in the base frame with $0 \leq s \leq l_a$ and $l_a \in \mathbb{R}^+$  denoting the arm length, $\prescript{B}{}{\bm{P}_{o}} = [x_{o}, y_{o}, z_{o}]^\top$ is the offset from the base frame, $\psi_a$ is the orientation, and ${\mathcal{\bm F}}(.)$ is a nonlinear mapping from the configuration space to the operational space, expressed as \cite{Webster2010}


\begin{equation}
{\mathcal{\bm F}}(s,\kappa,\psi_a) = 
\begin{bmatrix}
\frac{1}{\kappa}(1 - \cos(s\kappa)) \cos(\psi_a) \\
\frac{1}{\kappa}(1 - \cos(s\kappa)) \sin(\psi_a) \\
\frac{1}{\kappa} \sin(s\kappa)
\label{eq:nonlinear_mapping}
\end{bmatrix}.
\end{equation}

The position at any point $s$ along the CR is obtained using \eqref{eq:eq_s_position}. Taking the time derivative of \eqref{eq:eq_s_position} yields:

\begin{figure}[t]
    \centering
    \includegraphics[scale=0.25, trim={7cm 1cm 5cm 1cm}, clip]{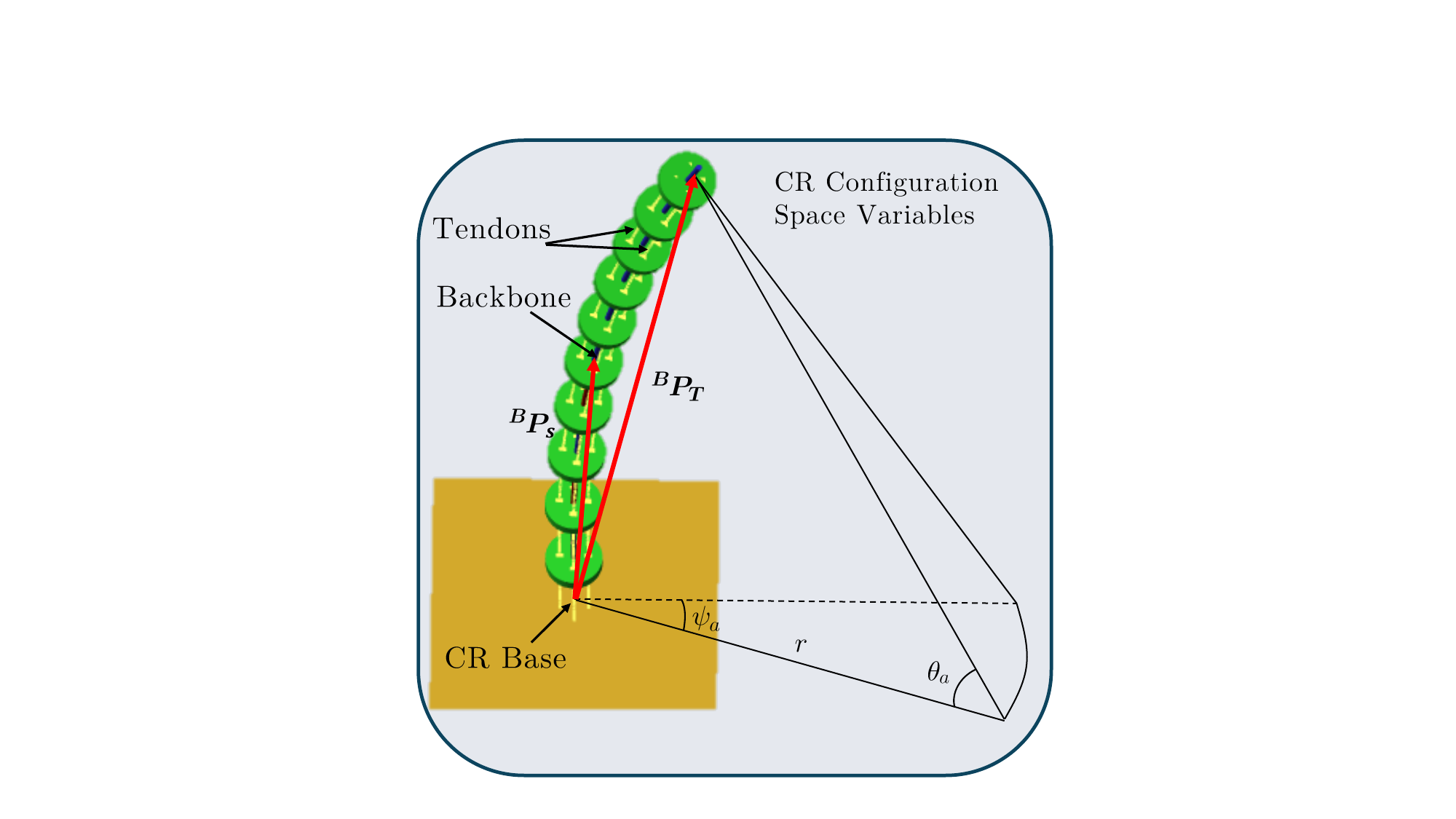}
    \caption{The configuration space variables of the CR under the PCC assumption.}
    \label{fig:CR_Frames}
\end{figure}

\begin{figure}[t]
    \centering
    \includegraphics[scale=0.25]{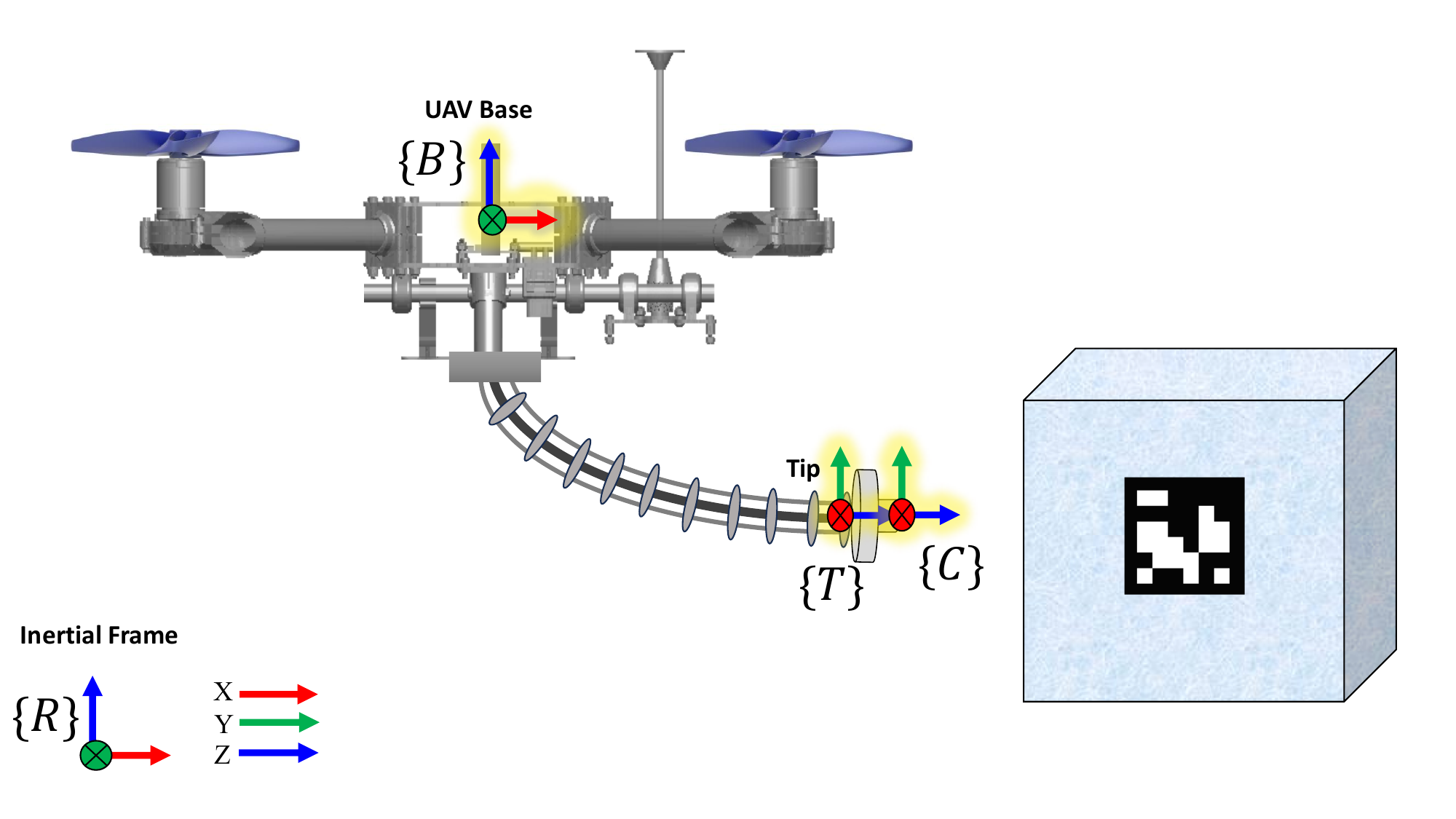}
    \caption{The frame attachment of the ACM.}
    \label{fig:ACMS_Frames}
\end{figure}
\vspace{-0.1cm}

\begin{equation}
    \partial \frac{\prescript{B}{}{\bm{P}}_{s}}{\partial t} = \frac{\partial \prescript{B}{}{\bm{P}}_{s}}{\partial \kappa}\dot \kappa + \frac{\partial \prescript{B}{}{\bm{P}}_{s}}{\partial \psi_a}\dot \psi_.
    \label{eq:partial_der}
\end{equation}

\noindent Arranging \eqref{eq:partial_der} in a matrix representation will yield the translational part of the geometric Jacobian, \linebreak$
\bm{J}_{pa} = [\frac{\partial \prescript{B}{}{\bm{P}}_{s}}{\partial \kappa} , \frac{\partial \prescript{B}{}{\bm{P}}_{s}}{\partial\psi_a}]\in \mathbb{R}^{3\times2},$ given in the Appendix A. Taking the time derivative of the rotation matrix describing the orientation of the frame attached to the spatial element $s$, denoted by $\prescript{B}{}{\bm{R}}_{s} \in SO(3)$, yields:

\begin{equation}
    \prescript{B}{}{\bm{\dot{R}}}_{s} = \frac{\partial \prescript{B}{}{\bm{R}}_{s}}{\partial \kappa}\dot {\kappa} + \frac{\partial \prescript{B}{}{\bm{R}}_{s}}{\partial \psi_a} \dot{\psi}_a\in \mathfrak{so}(3).
    \label{eq:partial_der_R}
\end{equation}

\noindent Therein, $SO(3)$ and $\mathfrak{so}(3)$ represent the special orthogonal Lie group and Lie algebra, respectively. Using \eqref{eq:partial_der_R}, the rotational part of the Jacobian will be
\vspace{-0.1cm}

\begin{equation}
\bm{J}_{oa} = 
\begin{bmatrix}
-s \sin(\psi_a) & 0 \\
s \cos(\psi_a) & 0 \\
0 & 1
\label{eq:rotation_jacob}
\end{bmatrix}
\in \mathbb{R}^{3\times2}.
\end{equation}

\begin{remark}
To avoid the representation singularity in the Frenet–Serret \textup{\cite{Chirikjian1994}} direct kinematics at $\kappa=0$, a curvature threshold $\kappa_s \in \mathbb{R}^+ $is introduced. Specifically,
\begin{equation}
\kappa = 
\begin{cases}
\kappa_s & \text{if } |\kappa|\leq \kappa_s, \\
\kappa & \text{otherwise}.
\end{cases}
\end{equation}
This regularization ensures numerical stability of the direct and differential kinematics computations.
\end{remark}

\subsection{ACM Differential Kinematics}

Here, the equations of differential kinematics are presented by consideration of the kinematic coupling of the CR and the floating base. Recall the CR joint variables in \eqref{eq:eq_s_position}, which can be appended to the UAV states to construct the following generalized joint-space vector
\vspace{-0.1cm}

\begin{equation}
    \bm{q} = 
    \begin{bmatrix}
        x & y & z & \varphi & \theta & \psi & \kappa & \psi_a\\
    \end{bmatrix}^\top  \in \mathbb{R}^{8} ,
\end{equation}
where $x$, $y$, and $z$ are the position and $\varphi$ (roll), $\theta$ (pitch), and $\psi$ (yaw) are the ZYX Euler angles describing the pose of the base (aerial platform). The time derivative of the position of the spatial element $s$ expressed in the inertial frame ${R}$ referring to Fig. \ref{fig:ACMS_Frames} can be derived as
$
\bm{J}_{p} = 
\begin{bmatrix}
    \bm{J}_{p1} & \bm{J}_{p2} & \bm{J}_{p3}\\
\end{bmatrix}
$
with
$
    \bm{J}_{p1} = 
\bm{I}_3
$,
$
    \bm{J}_{p2} = -[\prescript{R}{}{\bm{R}}_{B} \prescript{B}{}{\bm{P}}_{s}]_{\times} \bm{T}_e^{-1}
$, and
$
    \bm{J}_{p3} = \prescript{R}{}{\bm{R}}_{B} \bm{J}_{pa}
$.
The rotational part of Jacobian is expressed as
$
\bm{J}_{o} = 
\begin{bmatrix}
    \bm{J}_{o1} & \bm{J}_{o2} & \bm{J}_{o3}\\
\end{bmatrix} 
$
with
$
    \bm{J}_{o1} = 
\bm{0}_3
$,
$
    \bm{J}_{o2} = \bm{T}_e^{-1}
$, and
$
    \bm{J}_{p3} = \prescript{R}{}{\bm{R}}_{B} \bm{J}_{oa}
$,
 where $\prescript{R}{}{\bm{R}}_{B}$ is the rotation matrix presenting the orientation of the UAV with respect to the inertial frame. Additionally, $\bm{T}_e(\Phi)$, with $\Phi = [\varphi, \theta, \psi]^\top$, is the mapping from the rate of the Euler angles to angular velocities according to the ZYX convention. The final Jacobian matrix can be presented as 
$
\bm{J} = 
\begin{bmatrix}
    \bm{J}_{p}^\top & \bm{J}_{o}^\top\\
\end{bmatrix}^\top  \in \mathbb{R}^{6\times8} 
$. 

\subsection{ACM Dynamics}

The Euler--Lagrange method is used to derive the dynamics of the ACM. 
The total kinetic energy $K$ and potential energy $U$ of the system are expressed as  
\begin{equation}
K = K_u + K_b + K_d,
\label{eq:kintetic}
\end{equation}
\begin{equation}
U = U_u + U_b + U_d,
\label{eq:potential}
\end{equation}
\noindent where $K_u$ and $U_u$ represent the kinetic and potential energy of the UAV, 
$K_b$ and $U_b$ correspond to those of the CR backbone, 
and $K_d$ and $U_d$ denote the energies associated with the spacer disks used for tendon routing. One may take advantage of the works conducted in \cite{samadikhoshkho2021coupled} and the differential kinematics \eqref{eq:partial_der}-\eqref{eq:rotation_jacob} to obtain the following dynamics in the standard Lagrangian form
\vspace{-0.1cm}

\begin{equation}
    \bm{M}\ddot{\bm{q}} + \bm{C}\dot{\bm{q}}  + \bm{G} = \bm{\tau} - \bm{J}_t^\top \bm{F}_e,
\label{eq:ACMS_Dynamics}
\end{equation}

\noindent where $\bm{M}$, $\bm{C}$, and $\bm{G}$ denote the generalized mass, Coriolis, and gravity matrices, respectively. 
The generalized control torque vector of the fully actuated UAV and the CR, and the point external wrench, exerted on the tip, are denoted by $\bm{\tau}$ and $\bm{F}_e$, respectively. The decoupled model is obtained by setting the off-diagonal elements of the dynamic matrices in \eqref{eq:ACMS_Dynamics} to zero. $\bm{J}_t$ is Jacobian expressed in the tip frame and computed by setting $s=l_a$. Two representative off-diagonal terms of the mass matrix $\bm{M}$ are provided in Appendix~B to demonstrate the highly nonlinear and complex dependence of the coupling terms on the system parameters and states, derived via extensive symbolic computations.

Overall, the role of some parameters is more transparent, such as the quadratic scaling with the CR radius $r_a$ and the linear scaling with the CR density $\rho$. In contrast, the dependence on the curvature $\kappa$ is less intuitive. Benefiting from asymptotic simplification, near the backbone base ($s\to 0$), the coupling terms become negligible, with both $M_{17}$ and $M_{18}$ vanishing at least quadratically in $s$ due to $\sin(\kappa s/l)$ and $\cos(\kappa s/l)-1$ expansions. Moreover, under a small-curvature approximation ($\kappa\to 0$), the apparent $1/\kappa^2$ factor in $M_{17}$ is canceled by the $\kappa^2$ scaling of the associated trigonometric combination, yielding a leading-order term that is independent of $\kappa$ and primarily scaled by the inertial factor $\rho r_a^2$. Conversely, $M_{18}$ retains an explicit curvature dependence. Since $\cos(\kappa s/l)-1=O(\kappa^2)$, the prefactor $1/\kappa$ results in $M_{18}=O(\kappa)$, and thus $M_{18}\to 0$ as $\kappa\to 0$. This confirms that curvature effects on the coupling dynamics are entry-dependent and cannot be generalized across all off-diagonal mass-matrix terms.

\begin{assumption}
We assume, similar to \textup{\cite{Ghosal_2024}}, that the contributions of the disks to the potential energy and kinetic energy are negligible in \eqref{eq:kintetic} and \eqref{eq:potential} in order to reduce the computational burden and model complexity. While \textup{\cite{samadikhoshkho2021coupled}} accounts for disk effects, their rigid attachment to the backbone only amplifies the magnitude of the coupling terms and is consistent with the observed behavior.

\end{assumption}

\subsection{DPD-SM-IBVS Control Design}

The mapping from the image dynamics to the camera velocity expressed in the camera coordinate frame is determined using the interaction matrix (image Jacobian). Let's consider the corner pixel coordinates  as \linebreak $\bm{s} = \left(u_1, v_1, ..., u_N, v_N\right)^T \in \mathbb{R}^{2N} $, and $\bm{s}_d$ denotes the desired image feature vector. Let the image feature error be defined as follows: 
\vspace{-0.1cm}

\begin{equation}
\bm{e} = \bm{s}_d - \bm{s}.
\label{feature_error}
\end{equation}

\noindent Differentiating \eqref{feature_error} with respect to the time yields:
\vspace{-0.1cm}

\begin{equation}
    \dot{\bm{e}} = -\bm{L}({\bm{v}_c}-\bm v_t),
    \label{feature_error_rate}
\end{equation}

\noindent thereby ${\bm{L}} = \left(\bm{L}_1, \bm{L}_2, ..., \bm{L}_n \right)^T \in \mathbb{R}^{2N \times 6}$ is the interaction matrix, $\bm v_c$ and $\bm v_t$ denote the camera and target velocity in local camera frame, respectively.

\begin{assumption}
We assume that the perspective projection model of the camera incurs negligible radial and tangential distortion so that \eqref{feature_error_rate} is satisfied. 
\label{as:assumption_distortion}
\end{assumption}

\begin{assumption}
The norm of desired target velocity \eqref{feature_error} and its first and second time derivatives are bounded, e.g. $||\bm v_t||<L_1$ $||\dot {\bm v}_t||<L_2$, and $||\ddot {\bm v}_t||<L_3$. This assumption was utilized in works such as \textup{\cite{Li2021}}.
\label{as:assumption_derivatives}
\end{assumption}

The normalized coordinates of $i^{th}$ feature \linebreak $\bm{s}_{ni} = \left(x_{ni}, y_{ni} \right)^T$ is derived as follows: 

\begin{equation}
    x_{ni} = (u_i - c_x)/f_x,\hspace{0.5cm}  y_{ni} = (v_i - c_y)/f_y.
    \label{coordinates}
\end{equation}

\noindent As the above values are normally reported in pixels, one can neglect the value of effective pixel size. Thus,

\begin{equation}
{\bm{L}}_i = 
\begin{bmatrix}
- \frac{1}{Z_i} & 0 & \frac{x_{ni}}{Z_i} & x_{ni} y_{ni} & -(1 + x_{ni}^2) & y_{ni} \\
0 & -\frac{1}{Z_i} & \frac{y_{ni}}{Z_i} & (1 + y_{ni}^2) & -x_{ni} y_{ni} & -x_{ni}
\end{bmatrix}.
\label{eq:image_jacobian}
\end{equation}

\noindent The parameters, $f_x$, $f_y$, $c_x$, and $c_y$ represent the focal length and the principal point coordinates obtained from the calibration process \cite{Amiri2024}. 

\begin{theorem}
\label{thm:GAS_nominal}
Consider the system dynamics \eqref{eq:ACMS_Dynamics} undergoing the joint-space control command \eqref{eq:PD_Joint} where $\bm{J}_L=\bm L \bm T_a^{-1}\bm J_t$, and the dynamics-based
proportional-derivative sliding mode image-based visual servoing (DPD-SM-IBVS) control law \eqref{eq:img_controller}

\begin{equation}
    \bm{\tau} = \bm{G} + \bm{J}_L^\top \big( \bm{K}_p \bm{e}_a - \bm{K}_d \bm{J}_L \dot{\bm{q}} \big)\in \mathbb{R}^{8},
    \label{eq:PD_Joint}
\end{equation}
\begin{equation}
    \bm{e}_a = \bm{C}_d \dot{\bm e} + \bm{C}_p \bm e + \bm{C}_s\tanh\!\left(\tfrac{\bm e}{\sigma}\right)  \in \mathbb{R}^{6},
\label{eq:img_controller}
\end{equation}
with
\begin{equation}
    \bm{T}_a = 
    \begin{bmatrix}
        \bm{T}_e(\Phi) & \bm 0_{3 \times 3} \\
        \bm 0_{3 \times 3} & \bm{T}_e(\Phi) 
    \end{bmatrix}  \in \mathbb{R}^{6\times6} .
\end{equation}

\noindent In case $\bm K_p, \bm K_d, \bm{C}_p, \bm{C}_d, \bm{C}_s\in \mathbb{R}^{6\times6}$ are symmetric positive definite gain matrices and $\sigma \in \mathbb{R}^{+} $, 
then the image feature error $\bm e$ is globally ultimately bounded.
\end{theorem}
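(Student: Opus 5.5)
The plan is a Lyapunov argument that combines the mechanical energy of the ACM with a potential built from the image error, in the spirit of classical PD-plus-gravity-compensation analyses. Under the closed-loop law \eqref{eq:PD_Joint}, substituting into \eqref{eq:ACMS_Dynamics} gives
\[
\bm M\ddot{\bm q}+\bm C\dot{\bm q}=\bm J_L^\top\big(\bm K_p\bm e_a-\bm K_d\bm J_L\dot{\bm q}\big)-\bm J_t^\top\bm F_e .
\]
The first step is to relate $\bm J_L\dot{\bm q}$ to $\dot{\bm e}$. By \eqref{feature_error_rate}, $\dot{\bm e}=-\bm L(\bm v_c-\bm v_t)$. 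Since the camera is rigidly attached at the tip, $\bm v_c=\bm T_a^{-1}\bm J_t\dot{\bm q}$ up to the fixed camera-to-tip transform. Hence $\bm J_L\dot{\bm q}=-\dot{\bm e}+\bm L\bm v_t$.

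The second step is to choose the candidate
\[
V=\tfrac12\dot{\bm q}^\top\bm M\dot{\bm q}+\tfrac12\bm e^\top\bm K_p\bm C_p\bm e+\bm 1^\top\bm K_p\bm C_s\,\sigma\ln\cosh(\bm e/\sigma),
\]
with the $\bm C_d$ term absorbed into a modified inertia or damping term. Because the gains are only assumed symmetric positive definite, I would first treat them as diagonal, or commuting, so that these quadratic and $\ln\cosh$ forms are positive definite. The general case would follow by a congruence argument.

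Next I differentiate $V$ along trajectories and use the skew-symmetry of $\dot{\bm M}-2\bm C$, which holds for the Euler--Lagrange model with a proper choice of $\bm C$. This gives
\[
\dot V=\dot{\bm q}^\top\bm J_L^\top\bm K_p\bm e_a-\dot{\bm q}^\top\bm J_L^\top\bm K_d\bm J_L\dot{\bm q}-\dot{\bm q}^\top\bm J_t^\top\bm F_e+\dot{\bm e}^\top\bm K_p\big(\bm C_p\bm e+\bm C_s\tanh(\bm e/\sigma)\big).
\]
Substituting $\bm J_L\dot{\bm q}=-\dot{\bm e}+\bm L\bm v_t$, the cross terms $\dot{\bm e}^\top\bm K_p(\bm C_p\bm e+\bm C_s\tanh)$ cancel. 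The $\bm C_d\dot{\bm e}$ part of $\bm e_a$ produces an additional dissipation $-\dot{\bm e}^\top\bm K_p\bm C_d\dot{\bm e}$. What remains are the dissipative terms $-\|\bm J_L\dot{\bm q}\|^2_{\bm K_d}$ and $-\|\dot{\bm e}\|^2_{\bm K_p\bm C_d}$, plus perturbation terms. These perturbations are linear in $\bm v_t$, $\bm F_e$, and their products with $\bm e$ and $\dot{\bm e}$, and are bounded via Assumption \ref{as:assumption_derivatives} together with a bounded external wrench.

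The main obstacle is that $V$ and $\dot V$ only dissipate velocities, while the perturbation $\bm v_t^\top\bm L^\top\bm K_p(\bm C_p\bm e+\ldots)$ grows with $\bm e$. As a result, $\dot V$ is not negative outside a ball in the full state. I would address this first with a cross term $\epsilon\,\bm e^\top\bm K_p\bm C_d\,\dot{\bm e}$, or equivalently $\epsilon\,\bm e^\top\bm J_L\dot{\bm q}$-type terms, in the style of strict Lyapunov functions for PD control. With $\epsilon$ small, the cross term supplies $-\epsilon\,\bm e^\top\bm K_p\bm C_p\bm e$ while keeping $V$ positive definite. Young's inequality then yields
\[
\dot V\le -c_1\|\bm e\|^2-c_2\|\dot{\bm e}\|^2+c_3\|\bm e\|+c_4
\]
on the region where $\bm J_L$, $\bm L$ (bounded depth $Z_i$) and $\bm M$ are uniformly bounded and $\bm J_L$ has full row rank away from the curvature singularity, which Remark 1 handles. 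The standard ultimate-boundedness theorem (Khalil, Thm. 4.18) then gives global ultimate boundedness of $\bm e$. The bound is proportional to $L_1$ and $\|\bm F_e\|$ and shrinks with $\bm C_p$ and $\bm C_s$.
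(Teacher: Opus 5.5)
Your route differs from the paper's. The paper takes $V=\tfrac12\dot{\bm q}^\top\bm M\dot{\bm q}+\tfrac12\bm e_a^\top\bm K_p\bm e_a$, asserts a decrease $-\lambda\|\bm e_a\|^2$ directly, and only at the end recovers $\bm e$ from the filter $\bm e_a=\bm C_d\dot{\bm e}+\bm C_p\bm e+\bm C_s\tanh(\bm e/\sigma)$. You instead shape an energy in $\bm e$ and then strictify it, and the strictification is where your argument breaks. With a constant-$\epsilon$, unsaturated cross term such as $\epsilon\,\bm e^\top\bm K_p\bm C_d\dot{\bm e}$, the derivative contains $\epsilon\,\bm e^\top\bm K_p\bm C_d\ddot{\bm e}$, where
\[
\ddot{\bm e}=-\dot{\bm J}_L\dot{\bm q}-\bm J_L\bm M^{-1}\big(-\bm C\dot{\bm q}+\bm J_L^\top\bm K_p\bm e_a-\bm J_L^\top\bm K_d\bm J_L\dot{\bm q}-\bm J_t^\top\bm F_e\big)+\tfrac{d}{dt}(\bm L\bm v_t).
\]
This has three consequences. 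First, it produces terms of order $\|\bm e\|\,\|\dot{\bm q}\|^2$ (from $\bm C(\bm q,\dot{\bm q})\dot{\bm q}$ and $\dot{\bm J}_L\dot{\bm q}$) that no quadratic damping can dominate once $\|\bm e\|$ is large. Second, it adds a term in $\bm L\dot{\bm v}_t$, so $L_2$ enters your bound after all. Third, the damping acts only on $\bm J_L\dot{\bm q}$ while $\operatorname{rank}\bm J_L\le 6<8$, so the (at least two-dimensional) self-motion of the UAV--arm system is undamped. The $\|\dot{\bm q}\|^2$ terms are therefore not controlled by $-c_2\|\dot{\bm e}\|^2$. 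Khalil's Theorem~4.18 needs $\dot V\le -W_3(x)$ in the full state, so it cannot be applied to a $V$ that contains undamped kinetic energy. You also restrict to a region where $\bm J_L$, $\bm M$ and $\bm L$ are bounded, yet $\bm L$ is quadratic in $x_{ni},y_{ni}$ and unbounded as $Z_i\to0$. Taken together, the argument yields at most a regional or semi-global bound, not the global one claimed. A global version would need three ingredients: a normalized cross term (e.g.\ divided by $1+\|\bm e\|$); uniform bounds such as $\|\bm C(\bm q,\dot{\bm q})\|\le k_c\|\dot{\bm q}\|$; and a separate partial-state argument, or added null-space damping, for the self-motion. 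The paper's sketch never expands $\dot{\bm e}_a$, which contains the same $\bm C_d\ddot{\bm e}$ terms, so it offers no ready fix for this step.

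A second gap concerns the gain structure. Your exact cancellation needs $\bm K_p\bm C_p$ symmetric; otherwise $\dot{\bm e}^\top\bm K_p\bm C_p\bm e$ is not a time derivative. It also needs $\bm K_p\bm C_s$ diagonal, because $\bm K_p\bm C_s\tanh(\bm e/\sigma)$ is a gradient field only if its Jacobian $\sigma^{-1}\bm K_p\bm C_s D_{\tanh}(\bm e/\sigma)$ is symmetric for every $\bm e$. So ``commuting'' suffices for the quadratic terms but not for your $\ln\cosh$ term. Likewise, the claimed dissipation $-\dot{\bm e}^\top\bm K_p\bm C_d\dot{\bm e}$ requires $\bm K_p\bm C_d+\bm C_d\bm K_p\succ 0$, which fails for some non-commuting symmetric positive definite pairs. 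A congruence argument cannot repair any of this: under $\bm e=\bm P\tilde{\bm e}$ the bilinear form becomes $\bm P^\top\bm K_p\bm C_p\bm P$, which is symmetric exactly when $\bm K_p\bm C_p$ is. The skew part therefore survives as a sign-indefinite term of size $\|\dot{\bm e}\|\,\|\bm e\|$, which can be absorbed only under an extra smallness condition on the gains. As written, your proof covers essentially diagonal (or scalar) gains, not the general symmetric positive definite gains in the statement.
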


\begin{proof}

To prove the theorem, we take the following Lyapunov candidate function into consideration: 
\begin{equation}
    V = \tfrac12\dot{\bm q}^\top \bm M \dot{\bm q} + \tfrac12 \bm e_a^\top \bm K_p \bm e_a.
    \label{eq:Lyap_def}
\end{equation}

\noindent Substituting the control laws \eqref{eq:img_controller}--\eqref{eq:PD_Joint} structure into the time derivative of \eqref{eq:Lyap_def} yields 

\begin{equation}
    \dot V \le -\tfrac12\,\bm v_c^\top \bm K_d \bm v_c - \lambda \|\bm e_a\|^2 + \delta,
    \label{eq:Vdot_bound}
\end{equation}
with $\bm v_c=\bm J_L \dot{\bm q}$ for some $\lambda>0$ and 
$
    \delta \;=\; 
    \frac{\|\bm K_p\|^2}{2\lambda}\,\|r(t)\|^2
    \;,
    \label{eq:delta_exact}
$
where
$
    r(t) \;=\; \bm C_d \ddot{\bm v}_t 
    \;+\; \bm C_p \dot{\bm v}_t 
    \;+\; \frac{\bm C_s}{\sigma}\,
        D_{\tanh}\!\left(\tfrac{\bm e}{\sigma}\right)\dot{\bm v}_t,
    \label{eq:r_term}
$
and $D_{\tanh}(\tfrac{\bm e}{\sigma})=\mathrm{diag}\!\big(\operatorname{sech}^2(e_i/\sigma)\big)$. Considering Assumption \ref{as:assumption_derivatives}, $\delta$ is upper bounded by 
$
\delta \le \frac{\|\bm K_p\|^2}{2\lambda}\;R_{\max}^2,
$
where
\[
R_{\max}=\|C_d\|\,L_3+\|C_p\|\,L_2+\frac{\|C_s\|}{\sigma}\,L_2,
\]

\noindent hence \eqref{eq:Vdot_bound} will be:
\vspace{-0.2 cm}
\begin{equation}
    \dot V \le -\tfrac12\,\bm v_c^\top \bm K_d \bm v_c - \lambda \|\bm e_a\|^2 + \frac{\|\bm K_p\|^2}{2\lambda}\;R_{\max}^2.
    \label{eq:Vdot_bound_complete}
\end{equation}

 Inequality~\eqref{eq:Vdot_bound_complete} shows that $V$ decreases outside a compact set. 
Hence, $(\bm v_c,\bm e_a)$ are uniformly ultimately bounded and converge to the set
\[
\Big\{(\bm v_c,\bm e_a)\;\big|\;
\tfrac{1}{2}\,\bm v_c^\top \bm K_d \bm v_c 
+ \lambda \|\bm e_a\|^2 
\;\le\; \tfrac{\|\bm K_p\|^2}{2\lambda}\,R_{\max}^2
\Big\}.
\]
Since $\bm e_a$ includes the linear term $\bm C_p \bm e$ together with bounded nonlinear and derivative terms, boundedness of $\bm e_a$ and $\bm v_c$ implies boundedness of $\bm e$. 
Therefore, the image feature error $\bm e$ is globally ultimately bounded.
\end{proof}


\section{Results}
In this section, we present the simulation results of the ACM performance under various open-loop and closed-loop scenarios. Across all simulations, identical numerical conditions are maintained, and thus any observed discrepancies do not arise from numerical solver or step size. Initially, the open-loop responses are analyzed for different input profiles, including impulse and chirp excitations, external wrench, and free-fall conditions, for both coupled and decoupled models. Next, the models are compared with respect to key parameters such as CR length, radius, stiffness, platform mass, initial CR bending, and UAV orientation. Eventually, a comprehensive analysis of discrepancies in ground target tracking performance across different trajectories is conducted to assess the impact of neglecting the coupling terms.

\subsection{Open-Loop Response Analysis}
The open-loop responses of the models are analyzed in four tests over a simulation time of $1 \ \mathrm{[s]}$, with an identical initial state vector  
$\bm{q} = \begin{bmatrix}
    \bm{0}_{1\times2} & 5\ \mathrm{[m]} & \bm{0}_{1\times3} & 0.1\ \mathrm{[m^{-1}]} & 0
\end{bmatrix}^\top.$ 

In Test~A, the wrench profile
$
\bm{\tau} =
\begin{bmatrix}
100 \ \mathrm{[N]} &
\bm{0}_{1\times7}\\
\end{bmatrix}^\top
$
is applied to the ACM. This represents a strong actuation of the UAV in the inertial $x$-direction with a constant force.  

Test~B investigates the response when the actuation input is set to zero and the ACM is affected only by gravity (free-fall). In Test~C, the generalized wrench vector is given by  

\begin{align}
\bm{\tau} &=
\begin{bmatrix}
\bm{0}_{6\times1} \\
0.1 \sin\!\left(20 t + \tfrac{\pi}{4}\right)\,\mathrm{[Nm]} \\
0.1 \cos\!\left(20 t\right)\,\mathrm{[Nm]}
\end{bmatrix}.
\end{align}

\noindent which represents a sinusoidal excitation on the CR, as opposed to Test~A where only the UAV is actuated.  

Test~D examines the effect of applying the following external wrench exerted to the CR tip, expressed in the inertial frame:  
\begin{align}
\bm{F}_e =
\begin{bmatrix}
10 \,\sin\!\left( 2\pi \Big(f_0 t + \tfrac{f_1 - f_0}{2T} t^2 \Big) + \tfrac{\pi}{4} \right)\ \mathrm{[N]} \\
10 \,\sin\!\left( 2\pi \Big(f_0 t + \tfrac{f_1 - f_0}{2T} t^2 \Big) \right)\ \mathrm{[N]} \\
25\ \mathrm{[N]} \\
\bm{0}_{3\times1}
\end{bmatrix}.
\end{align}

\noindent with $f_0 = 1\ \mathrm{[Hz]}$, $f_1 = 1\ \mathrm{[Hz]}$, and $T=10 \ \mathrm{[s]}$.

The metric considered for the task-space comparison across various scenarios is the normalized root mean square error (NRMSE), defined as
\begin{equation*}
\mathrm{NRMSE}_{\bm{W}} = \sqrt{ \frac{\sum_{i=1}^{m} \sum_{k=1}^{N} \left( \bm{W}_i^{\mathrm{c}}[k] - \bm{W}_i^{\mathrm{d}}[k] \right)^2}{\sum_{i=1}^{m} \sum_{k=1}^{N} \left( \bm{W}_i^{\mathrm{max}} - \bm{W}_i^{\mathrm{min}} \right)^2} },
\end{equation*}
\noindent where the superscripts c and d denote the coupled and
decoupled values of the metric $\bm W \in \mathbb{R}^{m\times N}$, respectively.
The translational and rotational errors of the end-effector are
treated as separate comparison metrics and are reported in
Table \ref{tab:openloop_comparison} as $\mathrm{NRMSE}_T$ and $\mathrm{NRMSE}_R$, respectively.

\begin{table}[t]
\centering
\caption{Comparison of coupled and decoupled model responses across open-loop tests.}
\label{tab:openloop_comparison}
\begin{tabular}{lcccccc}
\toprule
\textbf{Metric} & \textbf{Test A} & \textbf{Test B} & \textbf{Test C} & \textbf{Test D}\\
\midrule
NRMSE\(_{T}\) [m] & $0.0124$ & $0.2378$ & $\bm{0.3576}$ & $0.2464$  \\
NRMSE\(_{R}\) [rad] & $\bm {1.0404}$ & $0.2751$ & $0.5369$ & $0.3002$  \\
\bottomrule
\end{tabular}%
\end{table}

Overall, the discrepancy between the decoupled and coupled models is significant in certain cases and can result in a maximum cumulative error of approximately $0.36\ \mathrm{[m]}$ for the end-effector position and $1.04\ \mathrm{[rad]}$ for the end-effector rotation, as shown in Table \ref{tab:openloop_comparison}. The most pronounced discrepancy in end-effector position between the models occurs in Test C, whereas Test A exhibits only a negligible effect. This suggests that neglecting the coupling terms has minimal impact on the end-effector position when only the UAV is actuated. However, the coupling effects become non-negligible in the presence of a generalized external force acting on the UAV. In contrast, the rotational deviation is primarily influenced by the generalized actuation wrench of the UAV.

\subsection{Parameter Sensitivity Analysis}
In addition to the discrepancies observed in the open-loop responses of the decoupled and coupled models under different actuation profiles and external wrenches, an important consideration is their sensitivity to physical parameters. The significance of the coupling terms is evident in certain prototypes, particularly when the inertia of the CR is substantial and the overall ACM center of mass undergoes notable variations.      

The results of the parameter sensitivity analysis are illustrated in Fig.~\ref{fig:open_loop_params}. 
The figure presents the 2D tip trajectory, projected onto the xy plane of the inertial frame, corresponding to the following piecewise-defined wrench vector with an excitation frequency of $f = 10\ \mathrm{[Hz]}$ and amplitude $a_m=100 \ \mathrm{[N]}$ over a $5\ \mathrm{[s]}$ simulation:
\begin{align}
\bm{\tau} =
\begin{cases}
\begin{bmatrix}
a_m \sin(f t) &
-a_m \cos(f t) &
\bm{0}_{6 \times 1}
\end{bmatrix}^\top & t \leq 2.5 \ \mathrm{[s]}, \\[10pt]
\begin{bmatrix}
a_m \cos(f t) &
a_m \sin(f t) &
\bm{0}_{6 \times 1}
\end{bmatrix}^\top & t > 2.5 \ \mathrm{[s]}.
\end{cases}
\end{align} 

The first parameter investigated is the backbone radius. Three values of $1 \ \mathrm{[mm]}$, $5 \ \mathrm{[mm]}$, and $10 \ \mathrm{[mm]}$ are analyzed, as shown in Fig.~\ref{fig:open_loop_params}(a). Increasing the backbone radius raises the total inertia of the system, resulting in reduced displacement. In this scenario, the CR is not actuated and therefore behaves as an additional payload. As a result, for the same input wrench, a larger CR experiences less excitation, and the overall variation in the center of mass and moment of inertia of the system is smaller compared to the other cases. This explains why the coupled and decoupled responses are similar for this parameter value.

Fig.~\ref{fig:open_loop_params}(b) shows the 2D tip position for UAV masses ranging from 1 to 3 kg. Similar to the first scenario, increased inertia results in reduced displacement. The results indicate that as the masses of the UAV and the robotic CR become more comparable, the coupling terms become more significant, leading to greater discrepancies in the system that diverge over time.

With the emergence of ACM with long arms for delivery and manipulation \cite{peng2025dexterous}, another important physical parameter to investigate is the CR length. Fig.~\ref{fig:open_loop_params}(c) presents the open-loop task-space Cartesian tip position for different CR lengths. Compared to variations in CR radius, changes in total inertia are less pronounced due to the slender geometry of the CR. Longer CRs tend to reduce the discrepancies between the coupled and decoupled models, although the response becomes more oscillatory. As shown in the graphs corresponding to a CR length of $1.5\ \mathrm{[m]}$, neglecting the coupling terms prevents these oscillations from being captured accurately.

The effect of the backbone material is examined in Fig.~\ref{fig:open_loop_params}(d) for three different Young’s moduli corresponding to steel ($E = 207\ \mathrm{[GPa]}$), titanium alloys ($E = 120\ \mathrm{[GPa]}$), and Nitinol ($E = 40\ \mathrm{[GPa]}$). The stiffer arms exhibit only negligible differences in the 2D tip position compared to those with greater compliance. However, the influence of this parameter is less significant than that of the other parameters, particularly when there is no contact or external wrench applied to the end-effector.

The impact of the initial bending is investigated in Test E. This is illustrated in Fig.~\ref{fig:open_loop_params}(e), where the initial bending $\kappa_0$ is set to three distinct values. The results indicate that the larger the initial bending, the larger the deviation between the coupled and decoupled models in the task-space.

Finally, as a representative study, we also examined the effect of the UAV initial orientation on the response. Here, the UAV roll angle is tested against different initial values for $\phi$, as seen in Fig.~\ref{fig:open_loop_params}(f). The largest discrepancy between the two approaches is visible for the larger initial UAV roll angles.

\begin{figure}[t]
    \centering
    \begin{subfigure}[t]{0.48\linewidth}
        \centering
        \includegraphics[scale=0.28]{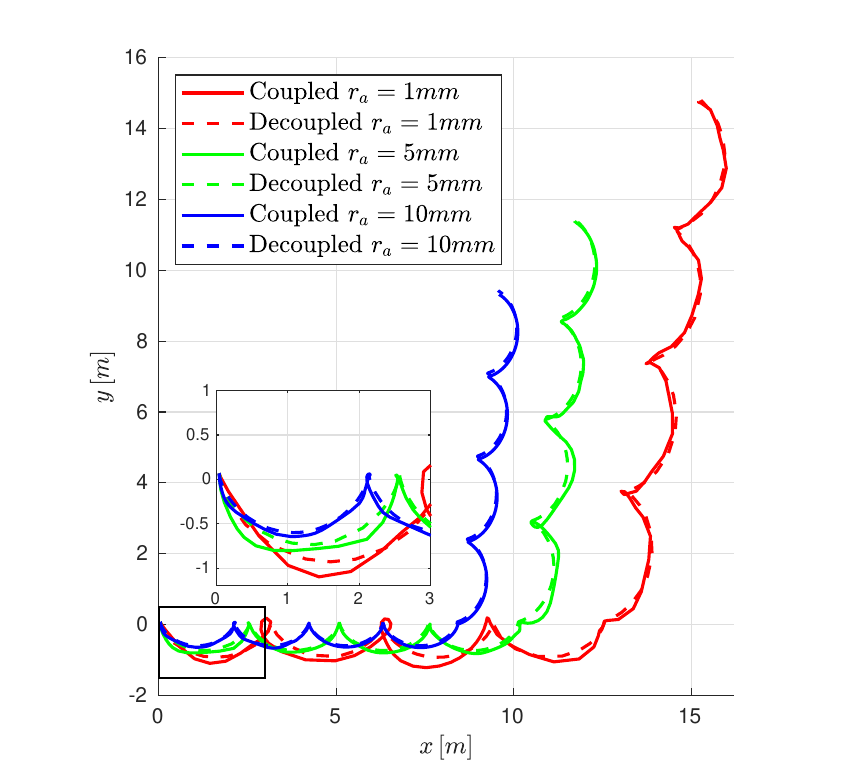}
        \caption{Different arm radii}
        \label{fig:arm_radius}
    \end{subfigure}
    \hfill
    \begin{subfigure}[t]{0.48\linewidth}
        \centering
        \includegraphics[scale=0.28]{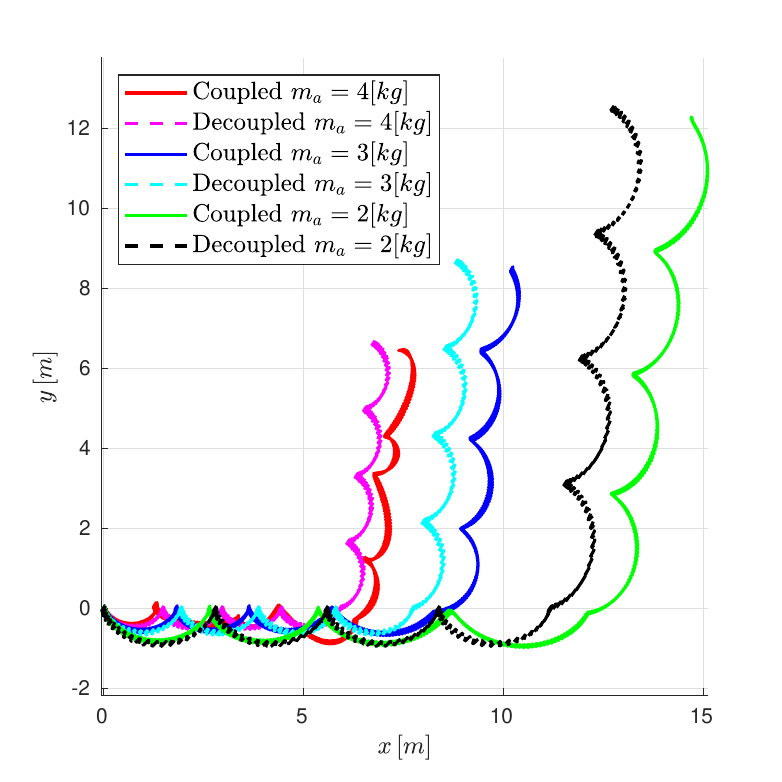}
        \caption{Different UAV masses}
        \label{fig:uav_mass}
    \end{subfigure}

    \vspace{0.5em}

    \begin{subfigure}[t]{0.48\linewidth}
        \centering
        \includegraphics[scale=0.28]{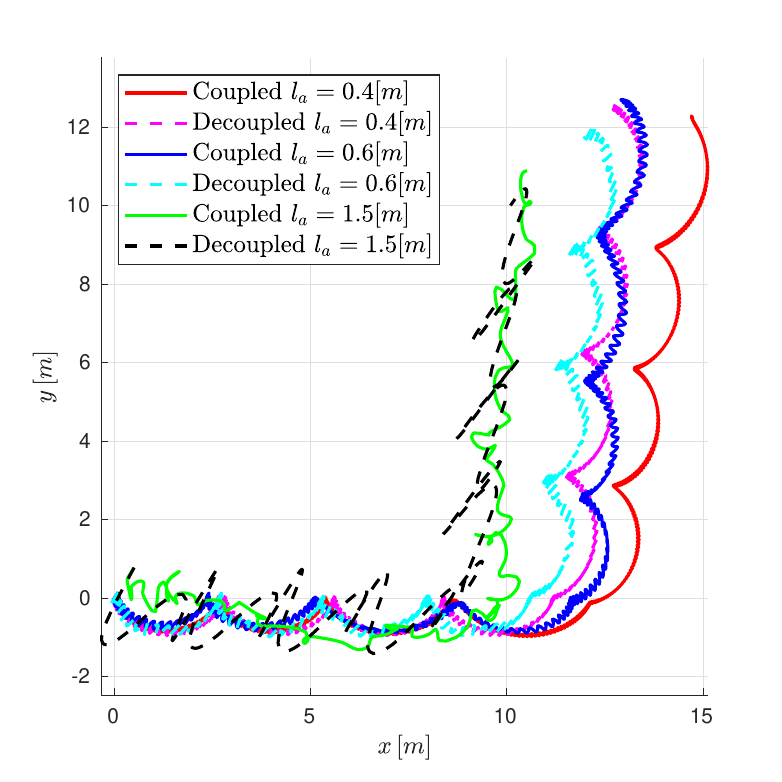}
        \caption{Different arm lengths}
        \label{fig:arm_length1}
    \end{subfigure}
    \hfill
    \begin{subfigure}[t]{0.48\linewidth}
        \centering
        \includegraphics[scale=0.28]{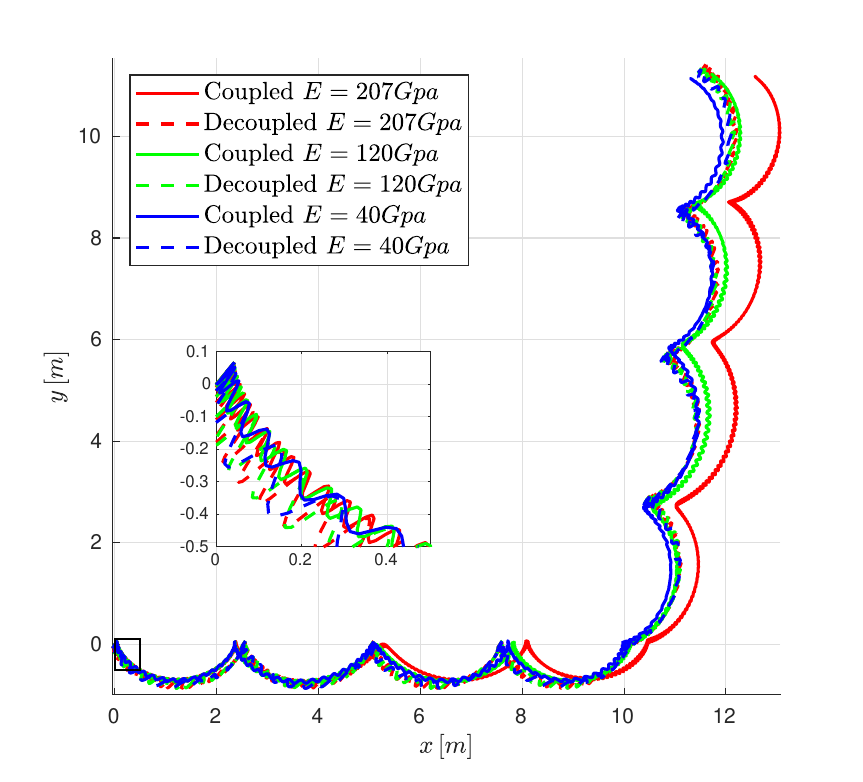}
        \caption{Different arm stiffness}
        \label{fig:stiffness}
    \end{subfigure}

    \vspace{0.5em}
    
\begin{subfigure}[t]{0.48\linewidth}
    \centering
    \raisebox{1.2mm}{%
        \includegraphics[scale=0.27]{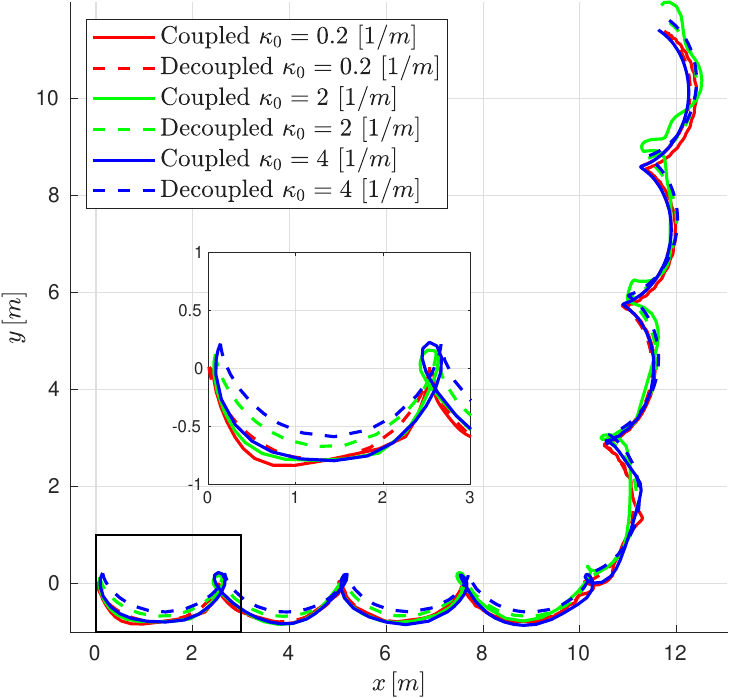}
    }
    \caption{Different arm bending}
    \label{fig:arm_bending}
\end{subfigure}
    \hfill
    \begin{subfigure}[t]{0.48\linewidth}
        \centering
        \includegraphics[scale=0.28]{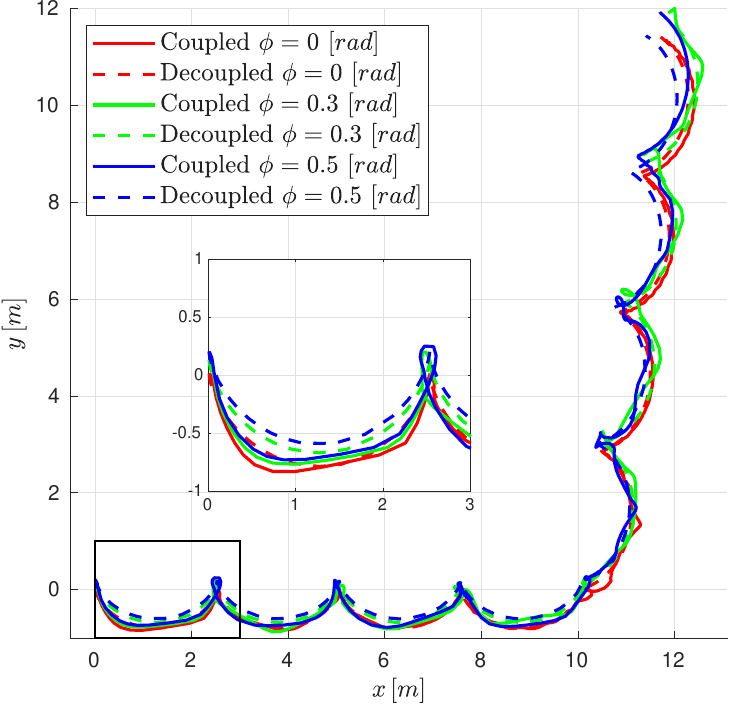}
        \caption{Different UAV roll angles}
        \label{fig:roll_ang}
    \end{subfigure}

    \caption{Open-loop responses for various system parameters.}
    \label{fig:open_loop_params}
\end{figure}

\subsection{Closed-loop Visual Tracking Analysis}

In the closed-loop evaluation, both the coupled and decoupled dynamic formulations are integrated into the DPD-SM-IBVS control architecture and assessed under a time-varying target trajectory $\bm{v}_t$. It is important to emphasize that both approaches employ the same control law; therefore, any observed performance differences between the coupled and decoupled cases arise solely from differences in the underlying dynamic model. These modeling differences directly affect the evolution of the system states and the Jacobian updates within the control loop.

Fig.~\ref{fig:MRAL} depicts the end-effector and reference trajectories projected onto the XY plane of the inertial frame over a $50$-second simulation horizon. The nonstationary world points are defined such that their geometric centroid traces the letters of the word ``MRAL'' across a set of prescribed trajectories. This word is selected because it contains straight, inclined, and curved segments, thereby providing a diverse set of motion primitives. The world points lie on a plane parallel to the inertial-frame XY plane. The figure highlights the end-effector motion under both modeling representations and the resulting deviations between them. To further interpret the observed discrepancies, the norm of the system velocity profile is shown in Fig.~\ref{fig:Mspeed} for the letter M. This allows the performance differences to be analyzed not only with respect to the geometric characteristics of the reference path, but also as a function of the platform's instantaneous velocity.

\begin{figure*}[h!]
    \centering
    \includegraphics[scale = 0.27]{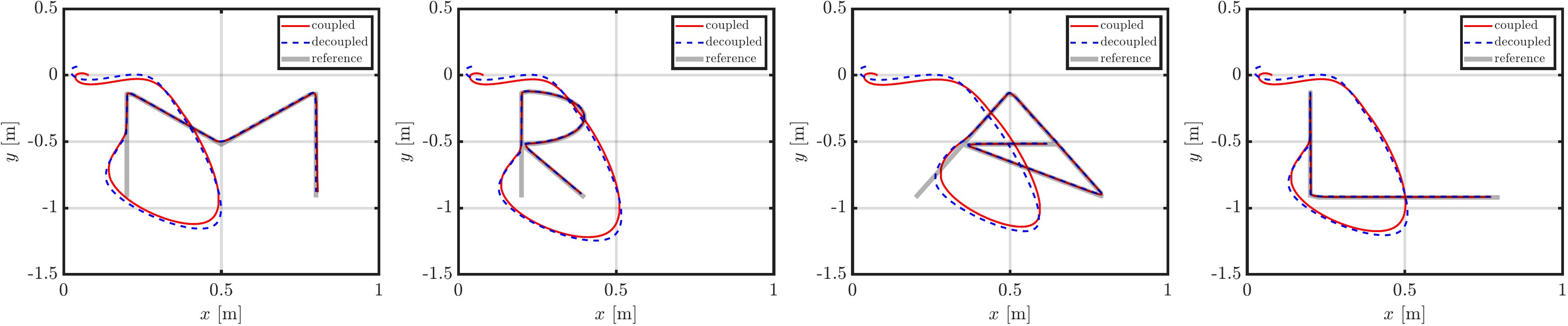} 
    \caption{The xy-plane tip trace of the ACM for coupled and decoupled under the DPD-SM-IBVS law.}
    \label{fig:MRAL}
\end{figure*}

To quantify the corresponding disparity in the image space, a norm-difference metric is defined as
$\text{DS} = \|\bm{e}_s\|_{c} - \|\bm{e}_s\|_{d}$,
where $\|\bm{e}_s\|_{c}$ and $\|\bm{e}_s\|_{d}$ denote the image-feature error norms obtained under the coupled and decoupled formulations, respectively. Fig.~\ref{fig:MRALe} illustrates the temporal evolution of this metric for the letter M. 

The comparative results indicate that the most pronounced disparities between the two formulations occur in regions with high curvature and near sharp corner transitions; however, their transient responses along linear segments remain largely similar. Moreover, the DS metric shows that the decoupled formulation deviates only slightly from the coupled formulation, remaining below $0.7~\mathrm{pixels}$ across all four test scenarios. This result suggests a negligible impact on visual tracking performance while reducing computational burden. Using the Simulink Profiler in MATLAB, the coupled model required $32~\mathrm{ms}$ per sampling instant, whereas the decoupled model required $22~\mathrm{ms}$, underscoring the computational advantage of the decoupled representation. Finally, the results also indicate a dependence on platform speed. Overall, when the platform velocity is high and the vision-based controller has not yet converged, larger discrepancies can occur because the coupling terms become less negligible.

\begin{figure}[h!]
    \centering
    \includegraphics[scale = 0.27, trim={0cm 0cm 0cm 0cm}, clip]{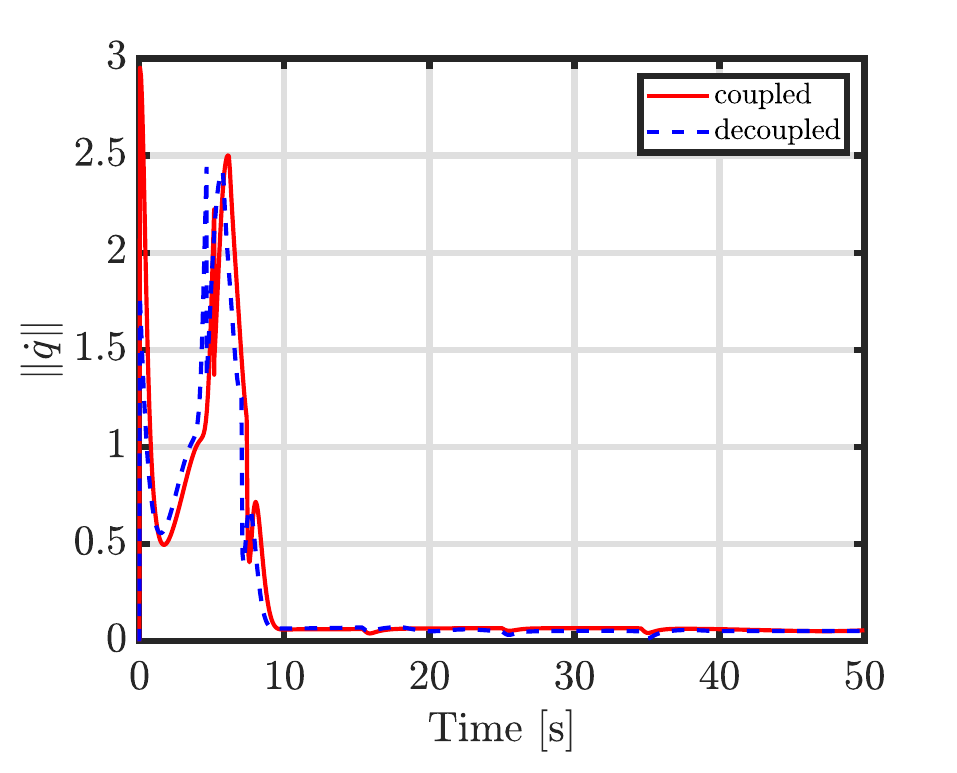} 
    \caption{The joint-space velocity norm for the letter \textit{M}.}
    \label{fig:Mspeed}
\end{figure}

\begin{figure}[h!]
    \centering
    \includegraphics[width = 0.5\columnwidth, height = 3.5cm, trim={0.9cm 0cm 0cm 0cm}, clip]{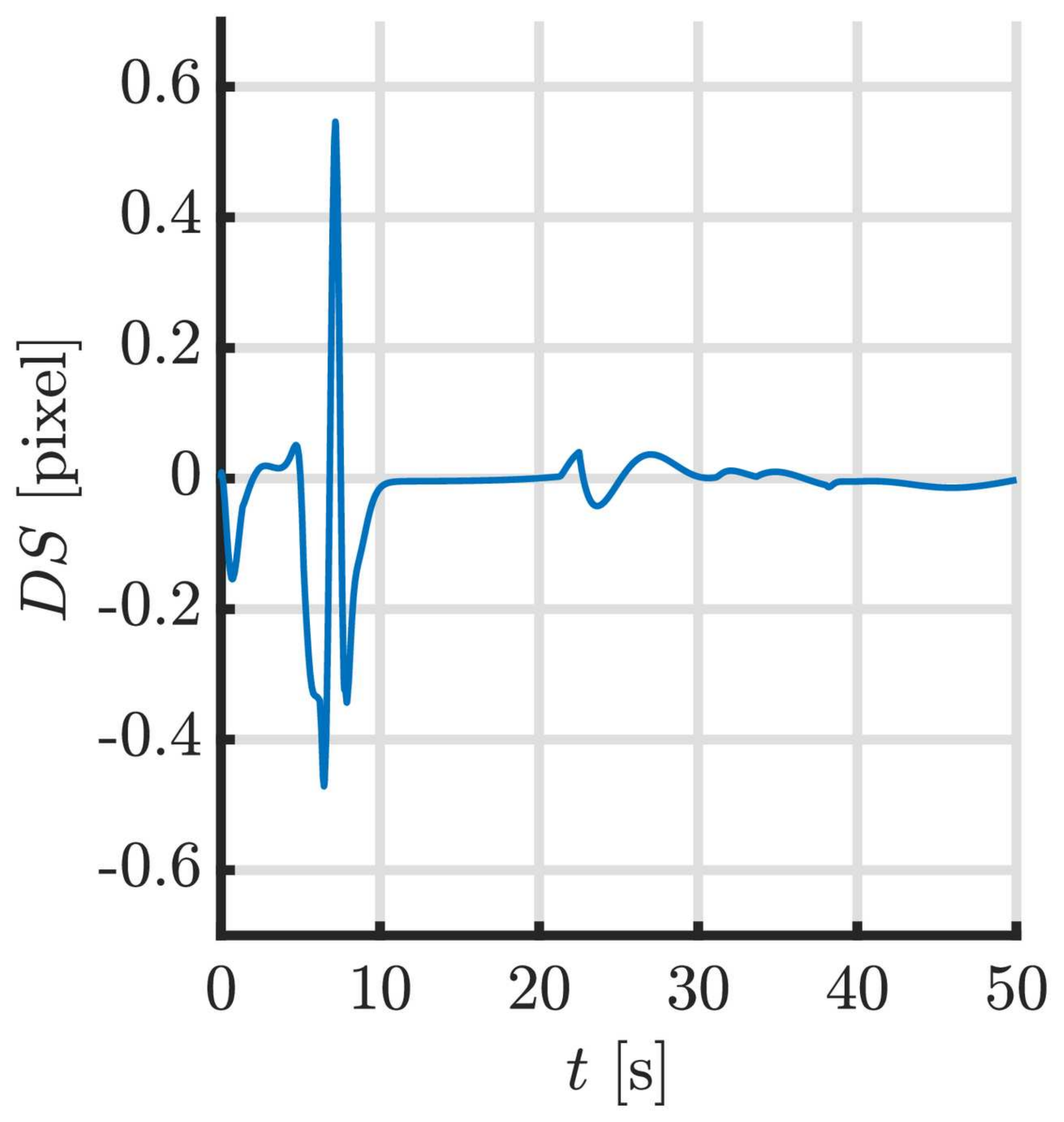} 
    \caption{The norm difference of the image feature error for the letter \textit{M}.}
    \label{fig:MRALe}
\end{figure}

\section{Conclusions}
While ACMs offer distinct advantages for precise tip positioning, navigation in confined environments, and natural compliance during aerial physical interactions, they still suffer from the high computational cost of coupled dynamic modeling which mainly arises from the nonlinearity of the embedded elastic elements. Such models capture the full states of both the aerial platform and the CR, accounting for their mutual interactions. However, this comprehensive formulation significantly increases model complexity and reduces computational efficiency. The analysis presented in this study shows that, under certain conditions, a decoupled model can provide accuracy comparable to the coupled dynamics, while achieving improved computational efficiency. This is made possible by eliminating selected coupling terms and neglecting the inertia interactions between the subsystems.

In open-loop studies, the structural configuration of the ACM plays a critical role in the contribution of coupling terms. This work focuses on a conventional prototype consisting of a single-section CR mounted downward at the center of the UAV. The results indicate that certain parameters strongly influence the validity of the modeling approach. For instance, in the case of long CRs, the decoupled model fails to accurately capture the oscillatory motion of the system. Conversely, when the CR is passive and merely carried as a payload by the UAV prior to manipulation, the decoupled model exhibits minimal discrepancy from the coupled model, regardless of the CR stiffness. Another noteworthy observation concerns the discrepancies between models under different actuation modes. While UAV actuation primarily contributes to rotational discrepancies in the ACM models, CR actuation, even when significantly smaller in magnitude than UAV actuation, has a pronounced effect on positional errors between models.

The results of the closed-loop control of the ACM indicate that both decoupled and coupled models can be successfully implemented to accomplish vision-guided tasks. The discrepancy between the models is primarily observed during the initial stages of trajectory tracking, while in the final stages both achieve similar accuracy. This suggests that employing a robust controller can compensate for modeling inaccuracies, allowing a simplified model to remain valid for successful task execution.

\appendices
\section{The geometric Jacobian of the CR}
\tiny
\begin{equation*}
\begin{aligned}
&\bm{J}_p=\\& 
\begin{bmatrix}
    \dfrac{\cos(\psi_a)\big(\cos(\kappa s) - 1\big)}{\kappa^2} 
    + \dfrac{s \sin(\kappa s) \cos(\psi_a)}{\kappa} 
    & \dfrac{\sin(\psi_a)\big(\cos(\kappa s) - 1\big)}{\kappa} \\

    \dfrac{\sin(\psi_a)\big(\cos(\kappa s) - 1\big)}{\kappa^2} 
    + \dfrac{s \sin(\kappa s) \sin(\psi_a)}{\kappa} 
    & -\dfrac{\cos(\psi_a)\big(\cos(\kappa s) - 1\big)}{\kappa} \\

    \dfrac{\sin(\kappa s)}{\kappa^2} 
    - \dfrac{s \cos(\kappa s)}{\kappa} 
    & 0
\end{bmatrix}.
\end{aligned}
\label{eq:Jp}
\end{equation*}
\normalsize

\section{The off-diagonal elements of the dynamics matrices}


\begin{equation*}
\tiny
\begin{split}
 & M_{17}=   \frac{r_a^{2} \rho \pi}{\kappa^{2}} \bigg[ \left(l \sin\left(\frac{\kappa s}{l}\right) - \kappa s \cos\left(\frac{\kappa s}{l}\right)\right) (\sin(\phi) \sin(\psi) + \cos(\phi) \cos(\psi) \sin(\theta)) -\\
&  \sin(\psi_a) (\cos(\phi) \sin(\psi) - \cos(\psi) \sin(\phi) \sin(\theta)) \left(l \cos\left(\frac{\kappa s}{l}\right) - l + \kappa s \sin\left(\frac{\kappa s}{l}\right)\right) +\\
 & \cos(\psi_a) \cos(\psi) \cos(\theta) \left(l \cos\left(\frac{\kappa s}{l}\right) - l + \kappa s \sin\left(\frac{\kappa s}{l}\right)\right) \bigg],
\end{split}
\end{equation*}

\begin{equation*}
\tiny
\begin{split}
 M_{18} = & \frac{l r_a^{2} \rho \pi}{\kappa} \left(\cos\left(\frac{\kappa s}{l}\right) - 1\right)  \\
 & \left(\cos(\phi) \cos(\psi_a) \sin(\psi) + \cos(\psi) \cos(\theta) \sin(\psi_a) - \cos(\psi_a) \cos(\psi) \sin(\phi) \sin(\theta)\right).
\end{split}
\end{equation*}


\bibliography{refs}
\bibliographystyle{ieeetr}
\end{document}